\newtheorem{theorem}{Theorem}
\theoremstyle{definition}
\DeclareMathOperator{\E}{\mathbb{E}}
\ifwacvfinal\pagestyle{empty}\fi
\begin{document}

\title{microbatchGAN: Stimulating Diversity with Multi-Adversarial Discrimination}
\author{Gon\c{c}alo Mordido \hspace{2cm} Haojin Yang \hspace{2cm} Christoph Meinel\\
Hasso Plattner Institute\\
{\tt\small goncalo.mordido@hpi.de}
}

\maketitle
\ifwacvfinal\thispagestyle{empty}\fi

\begin{abstract}
We propose to tackle the mode collapse problem in generative adversarial networks (GANs) by using multiple discriminators and assigning a different portion of each minibatch, called microbatch, to each discriminator. We gradually change each discriminator's task from distinguishing between real and fake samples to discriminating samples coming from inside or outside its assigned microbatch by using a diversity parameter $\alpha$. The generator is then forced to promote variety in each minibatch to make the microbatch discrimination harder to achieve by each discriminator. Thus, all models in our framework benefit from having variety in the generated set to reduce their respective losses.
We show evidence that our solution promotes sample diversity since early training stages on multiple datasets.
\end{abstract}

\section{Introduction}

Generative adversarial networks~\cite{gans}, or GANs, consist of a framework describing the interaction between two different models - one generator (G) and one discriminator (D) - that are trained together. 
While $G$ tries to learn the real data distribution by generating realistic looking samples that are able to fool $D$, $D$ tries to do a better job at distinguishing between real and the fake samples produced by $G$.
Although showing very promising results across various domains~\cite{edwards2015censoring, ho2016generative, yu2017seqgan, yang2017improving, donahue2018synthesizing}
, GANs have also been continually associated with instability in training, more specifically mode collapse~\cite{cross_domain_gans, towards, the_numerics_of_gans, mode_regularized_gans, wgan}.
This behavior is observed when $G$ is able to fool $D$ by only generating samples from the same data mode, leading to very similar looking generated samples. This suggests that $G$ did not succeed in learning the full data distribution but, instead, only a small part of it. This is the main problem we are trying to solve with this work.

The proposed solution is to use multiple discriminators and assign each $D$ a different portion of the real and fake minibatches, \textit{i.e., }microbatch. Then, we update each $D$'s task to discriminate between samples coming from its assigned fake microbatch and samples from the microbatches assigned to the other discriminators, together with the real samples. We call this microbatch discrimination. Throughout training, we gradually change from the originally proposed real and fake discrimination by \cite{gans} to the introduced microbatch discrimination by the use of an additional diversity parameter $\alpha$ that ultimately controls the diversity in the overall minibatch.

The main idea of this work is to force $G$ to reduce its loss by inducing variety in the generated set, complicating each $D$'s task on separating the samples in its microbatch from the rest. Even though only producing very similar images would also complicate the desired discrimination, it would not benefit any of the models. This is due to the attribution of distinct probabilities by each $D$ to samples from and outside its microbatch being required to minimize $G$ and $D$'s losses. Hence, all models in the proposed framework, called microbatchGAN, benefit directly from diversity in the generated set.

Our main contributions can be stated as follows:
\begin{enumerate*}[label=(\roman*)]
\item proposal of a novel multi-adversarial GANs framework (Section~\ref{sec:solution}) that mitigates the inherent mode collapse problem in GANs;
\item empirical evidence on multiple datasets showing the success of our approach in promoting sample variety since early stages of training (Section~\ref{sec:results})
\item Competitiveness against other previously proposed methods on multiple datasets and evaluation metrics (Section~\ref{sec:comparision}).
\end{enumerate*}

\subsection{Related Work}

Previous works have optimized GANs training by changing the overall models' objectives, either by using discrepancy measurements~\cite{mmd_gan, sutherland2016generative} or different divergence functions~\cite{nowozin2016f, uehara2016generative} to approximate the real data distribution. Moreover, \cite{energy_gan, berthelot2017began, coulomb_gan} proposed to use energy-driven objective functions to encourage sample variety, \cite{mroueh2017mcgan} tried to match the mean and covariance of the real data, and \cite{unrolled_gan} used an unrolled optimization of $D$ to train $G$. 
\cite{mode_regularized_gans, warde2016improving, wang2017magan, berthelot2017began} penalized missing modes by using an extra autoenconder in the framework. \cite{is} performed minibatch discrimination by forcing $D$ to condition its output on the similarity between the samples in the minibatch. \cite{springenberg2015unsupervised} increased $D$'s robustness by maximizing the mutual information between inputs and corresponding labels, while \cite{lin2017pacgan} forced $D$ to make decisions over multiple samples of the same class, instead of independently.

Regarding using multiple discriminators, \cite{multiple_random_projection_gans} extended the framework to several discriminators with each focusing in a low-dimensional projection of the data, set a priori. \cite{gman} proposed GMAN, consisting of an ensemble of discriminators that could be accessed by the single generator according to different levels of difficulty.
\cite{dual_GANs} introduced D2GAN, introducing a single generator dual discriminator architecture where one discriminator rewards samples coming from the true data distribution whilst the other rewards samples coming from the generator, forcing the generator to continuously change its output.
\cite{mordido2018dropout} proposed Dropout-GAN, applying adversarial dropout by omitting the feedback of a given $D$ at the end of each batch.

\section{Generative Adversarial Networks}

The original GANs framework~\cite{gans} consists of two models: a generator ($G$) and a discriminator ($D$). Both models are assigned different tasks: whilst $G$ tries to capture the real data distribution $p_r$, $D$ learns how to distinguish real from fake samples. $G$ maps a noise vector $z$, retrieved from a noise distribution $p_{z}$, to a realistic looking sample belonging to the data space. $D$ maps a sample to a probability $p$, representing the likeliness of that given sample coming from $p_{r}$ rather than from $p_g$. The two models are trained together and play the following minimax game:

\begin{equation}
\begin{split}
\min_{G}\max_{D}V(D,G) =\\ \E_{x \sim p_{r}(x)}[\log D(x)] + \E_{z \sim p_{z}(z)}[\log (1-D(G(z)))],
\end{split}
\label{gans_minimax}
\end{equation}

where $D$ maximizes the probability of assigning samples to the correct distribution and $G$ minimizes the probability of its samples being considered from the fake data distribution. 

Alternatively, one can also train $G$ to maximize the probability of its output being considered from the real data distribution, \textit{i.e.,} $\log D(G(z))$. Even though this changes the type of the game, by being no longer minimax, it avoids the saturation of the gradient signals at the beginning of training~\cite{gans}, where $G$ only receives continuously negative feedback, making training more stable in practice. However, since we employ multiple discriminators in the proposed framework, it is less likely that $G$ does not receive any positive feedback from the whole adversarial ensemble~\cite{gman}. Therefore, we make use of the original value function in this work.

\section{microbatchGAN}
\label{sec:solution}

In this work, we propose a novel generative multi-adversarial framework named microbatchGAN, where we start by splitting each minibatch into several microbatches and assigning a unique one to each $D$. The key aspect of this work is the usage of microbatch discrimination, where we change the original discrimination task of distinguishing between real and fake samples, as proposed in \cite{gans}, to each $D$ distinguishing between samples coming or not from its fake microbatch. This change is performed in a gradual fashion, using an additional diversity parameter $\alpha$. Thus, each $D$'s output gradually changes from the probability of a given sample being real to the probability of a given sample not belonging to its fake microbatch. Moreover, since each $D$ is trained with different fake and real samples, we encourage them to focus on different data properties. Figure~\ref{fig:frameworks} illustrates the proposed framework.


\begin{figure}[h]
  \begin{center}
    \includegraphics[width=0.45\textwidth]{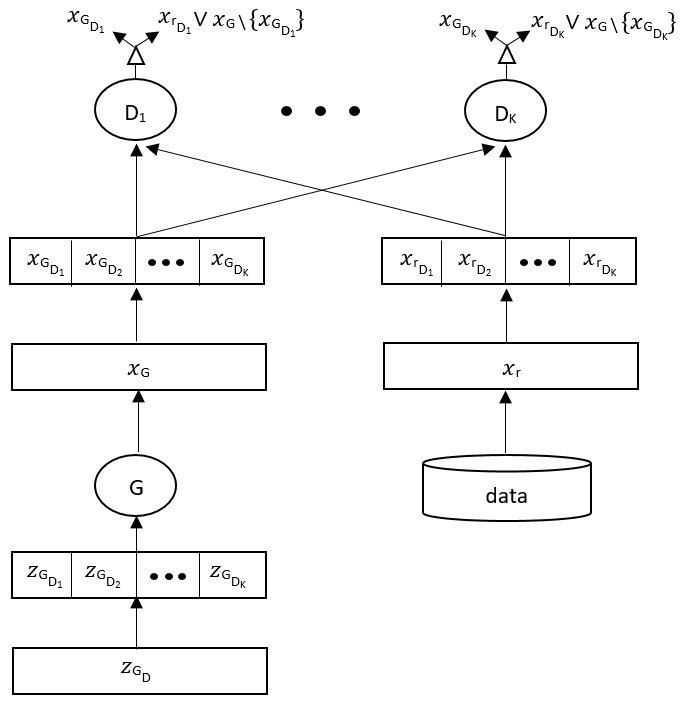}
  \end{center}
  \caption{microbatchGAN framework assuming a positive diversity parameter $\alpha$. Each discriminator $D_k$ is assigned a different microbatch $x_{G_{D_{k}}}$, where it discriminates between samples coming from inside its microbatch and samples coming from the microbatches assigned to the rest of the discriminators ($x_G \setminus x_{G_{D_{k}}}$) together the real samples $x_{r_{D_{k}}}$.}
  \label{fig:frameworks}
\end{figure}

The proposed microbatch-level discrimination task leads to $G$ making such discrimination harder for each $D$ to lower its loss. Hence, $G$ is forced to induce variety on the overall minibatch, making it a substantially harder task for each $D$ to be able to separate its subset of fake samples in the diverse minibatch. Note that producing very similar samples across the whole minibatch would also make such discrimination difficult by making the whole minibatch the same. However, $G$ also benefits from each $D$ assigning distinct probabilities to samples from inside and outside its designed microbatch to lower its loss, making the generation of different samples in the minibatch a necessary requirement to obtain different outputs from $D$. Hence, all models in our framework benefit directly from sample variety in the generated set.



In the microbatchGAN scenario with a positive diversity parameter $\alpha$, each $D$ assigns low probabilities to fake samples from its microbatch and high probabilities to fake samples from the rest of the microbatches as well as samples from the real data distribution. Hence, fake samples in the rest of the minibatch, \textit{i.e., }not coming from its assigned microbatch, shall be given distinct output probabilities by each $D$. On the other hand, $G$ minimizes the probability given by each $D$ to the samples outside its microbatch and maximizes the probability given to the fake samples assigned to that specific $D$. The value function of our minimax game is as follows:

\begin{equation}
\begin{split}
\min_{G}\max_{\big\{D_k\big\}} \sum_{k=1}^{K} V(D_k,G) =\sum_{k=1}^{K} \E_{x \sim p_{r_{D_{k}}}(x)}[\log D_k(x)] \\+ \E_{z \sim p_{z_{G_{D_{k}}}}(z)}[\log (1-D_k(G(z)))] \\+ \alpha \times \E_{z' \sim p_{z_{G_{D}}\setminus \small\{z_{G_{D_{k}}}\small\}}(z')}[\log D_{k}(G(z'))],
\end{split}
\label{ours_minimax}
\end{equation}

where K represents the number of total discriminators in the set. $p_{r_{D_{k}}}$ represents real samples from $D_k$'s real microbatch,  $p_{z_{G_{D_{k}}}}$ indicates fake samples from $D_k$'s fake microbatch, and $p_{z_{G_{D}}\setminus \small\{z_{G_{D_{k}}}\small\}}$ relates to the rest of the fake samples in the minibatch but not in $p_{z_{G_{D_{k}}}}$. $\alpha$ represents the \textit{diversity parameter} responsible for penalizing the incorrect discrimination of fake samples coming from $p_{z_{G_{D}}\setminus \small\{z_{G_{D_{k}}}\small\}}$ by each $D_k$. Note that $\alpha$ = 0 would represent the original GANs objective for each $D$ in the set.
The training procedure of microbatchGAN is presented in Algorithm~\ref{alg:framework}.

\begin{algorithm}
    \caption{microbatchGAN.}
\begin{algorithmic}
   \STATE {\bfseries Input:} $K$ number of discriminators, $\alpha$ diversity parameter, $B$ minibatch size
    \STATE {\bfseries Initialize:} $m \leftarrow \frac{B}{K}$
    
    \FOR{number of training iterations}
    
    \STATE {} \textbullet~Sample minibatch $z_i$, $i=1 \ldots B$, $z_i \sim p_g(z)$
    \STATE {} \textbullet~Sample minibatch $x_i$, $i=1 \ldots B$, $x_i \sim p_{r}(x)$
    
    \FOR{$k = 1$ to $k = K$}
    
    \STATE {} \textbullet~Sample microbatch $z_{k_{j}}$, $j = 1 \ldots m$, $z_{k_{j}} = z_{(k-1) \times m + 1 : k \times m}$
    \STATE {} \textbullet~Sample microbatch $x_{k_{j}}$, $j = 1 \ldots m$, $x_{k_{j}} = x_{(k-1) \times m + 1 : k \times m}$
    \STATE {} \textbullet~Sample microbatch $z'_{k_{j}}$, $j = 1 \ldots m$,  $z'_{k_{j}} \subset z_i \setminus \small\{z_{k_{j}}\small\}$
        
    
    
    \STATE {} \textbullet~Update $D_k$ by ascending its stochastic gradient:
    $$\nabla_{\theta_{D_k}} \frac{1}{m} \sum_{j=1}^{m} [\log D_k(x_{k_{j}}) +\log (1-D_k(G(z_{k_{j}})))$$
    $$+ \alpha \times \log D_{k}(G(z'_{k_{j}}))]$$
    \ENDFOR
    \STATE {} \textbullet~Update $G$ by descending its stochastic gradient:
    $$\nabla_{\theta_{G}} \sum_{k=1}^{K} \big[ \frac{1}{m} \sum_{j=1}^{m} [\log (1-D_k(G(z_{k_{j}})))$$
    $$+ \alpha \times \log D_{k}(G(z'_{k_{j}}))]\big]$$
    
    \ENDFOR
\end{algorithmic}
    \label{alg:framework}
\end{algorithm}

\subsection{Theoretical Discussion}

To better understand how our approach differs from the original GANs in promoting variety in the generated set, we study a simplified version of the minimax game where we freeze each $D_k$ and train $G$ until convergence. In the most extreme case, we say that we have mode collapse when:

\begin{equation}
\label{proof_mode_collapse}
\text{For all } z' \sim p_g(z), G(z')=x
\end{equation}

\begin{theorem}
In original GANs, mode collapse fully minimizes $G$'s loss when we train $G$ exhaustively without updating $D$.
\end{theorem}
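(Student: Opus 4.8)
The plan is to exploit the fact that, once $D$ is frozen, the generator's objective decouples across the latent samples, so that it can be minimized pointwise. First I would isolate the part of the value function in \eqref{gans_minimax} that depends on $G$: since the term $\E_{x \sim p_r}[\log D(x)]$ is constant in $G$, training $G$ exhaustively with $D$ held fixed amounts to minimizing $L(G) = \E_{z \sim p_z}[\log(1 - D(G(z)))]$ alone.

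Next I would observe that this is an expectation of a per-sample cost $\ell(z) = \log(1 - D(G(z)))$ that depends on the latent code $z$ only through its image $G(z)$, with no coupling between distinct latent codes. Because $t \mapsto \log(1-t)$ is strictly decreasing on $[0,1)$, the cost $\ell(z)$ is minimized exactly when $D(G(z))$ is as large as possible. Let $x^\star$ be a maximizer of $D$ over the data space (which exists whenever $D$ is continuous on a compact domain; otherwise one works with a supremizing sequence). Then each term is bounded below by $\log(1 - D(x^\star))$, and this lower bound is attained precisely when $G(z) = x^\star$.

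The key step is then to note that $x^\star$ does not depend on $z$: the same point minimizes every per-sample term simultaneously. Hence, assuming $G$ has enough capacity to realize a constant map, the global minimizer of $L(G)$ sends every latent code to $x^\star$, i.e. $G(z') = x^\star$ for all $z' \sim p_g(z)$. This is exactly the mode-collapse configuration of \eqref{proof_mode_collapse} with $x = x^\star$, and it attains the lower bound $\log(1-D(x^\star))$ on $L(G)$. Therefore mode collapse fully minimizes $G$'s loss, as claimed.

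I expect the main obstacle to lie in the edge cases surrounding the existence and attainment of the maximizer $x^\star$. If $D$ saturates to $1$ somewhere, the per-sample cost diverges to $-\infty$ and the statement must be read in the limiting or infimum sense; if no exact maximizer exists, I would replace it by a supremizing argument and show that collapsing onto a near-maximizer drives $L(G)$ arbitrarily close to its infimum. The conceptual point I would emphasize, and which motivates the contrast with the proposed objective \eqref{ours_minimax}, is that the decoupling of $L(G)$ across latent codes is exactly what lets a single collapsed point be globally optimal; the additional $\alpha$-term introduced in \eqref{ours_minimax} is precisely what breaks this decoupling and penalizes sending every code to one point.
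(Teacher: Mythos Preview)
Your proposal is correct and follows essentially the same approach as the paper: identify $x^\ast = \operatorname{argmax}_x D(x)$ and observe that collapsing $G$ onto $x^\ast$ minimizes the generator's loss since $x^\ast$ is independent of $z$. Your version is more careful---you explicitly isolate the $G$-dependent term, invoke the monotonicity of $t\mapsto\log(1-t)$, and discuss existence/attainment of the maximizer---but the underlying argument is the same as the paper's two-line proof.
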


\begin{proof}
The optimal $x^{\ast}$ is the one that maximizes $D$'s output, where:
$x^{\ast} = \underset{x}{\text{argmax}} D(x)$.
Thus, assuming $G$ would eventually learn how to produce $x^{\ast}$, mode collapse on $x^{\ast}$ would fully minimize its loss, making $x^{\ast}$ independent of $z$.
\end{proof}

\begin{theorem}
In microbatchGAN, assuming $\alpha > 0$, $x \sim p_g$ must be dependent of $z$ for $G$ to fully minimize its loss, mitigating mode collapse when we train $G$ exhaustively without updating any $D_k$.
\end{theorem}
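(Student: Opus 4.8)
The plan is to isolate the part of the objective that $G$ actually controls and then show that the collapse ansatz (\ref{proof_mode_collapse}) is structurally unable to reach its optimum once $\alpha>0$. From Algorithm~\ref{alg:framework} (equivalently, the last two lines of (\ref{ours_minimax})), the quantity $G$ minimizes with the $D_k$ frozen is
\begin{equation*}
L_G = \sum_{k=1}^{K}\Big[\E_{z \sim p_{z_{G_{D_{k}}}}}[\log(1-D_k(G(z)))] + \alpha\,\E_{z' \sim p_{z_{G_{D}}\setminus\{z_{G_{D_{k}}}\}}}[\log D_k(G(z'))]\Big].
\end{equation*}
First I would record the two opposing pressures this places on each $D_k$: the first term is decreasing in $D_k(G(z))$, so it is minimized by driving $D_k\to 1$ on the samples of microbatch $k$, whereas (because $\alpha>0$) the second term is increasing in $D_k(G(z'))$, so it is minimized by driving $D_k\to 0$ on the samples of every other microbatch. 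In short, each $D_k$ wants to \emph{accept} its own microbatch and \emph{reject} all the others.

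Second, I would substitute the collapse condition $G(z)=x$ for all $z$. The crucial observation is that collapse makes the ``accept'' and ``reject'' inputs of every $D_k$ numerically identical: both expectations are now evaluated at the single point $x$, so the per-discriminator contribution reduces to the scalar expression $g_k(x)=\log(1-D_k(x))+\alpha\log D_k(x)$. Since $D_k(x)$ is a single number, it cannot make both terms small at once: driving $D_k(x)\to 1$ sends the first term toward its optimum but pins the second at $\alpha\log D_k(x)\to 0$, its worst value, and symmetrically $D_k(x)\to 0$ optimizes the second while stalling the first. The two sub-objectives are thus in direct conflict, so collapse cannot attain the per-term optimum of $L_G$.

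Third, I would exhibit a strictly better competitor that is necessarily $z$-dependent. If $G$ instead assigns a microbatch-specific output $x^{(k)}$ to the $z$'s of microbatch $k$, chosen so that $D_k(x^{(k)})\to 1$ while $D_{k'}(x^{(k)})\to 0$ for every $k'\neq k$, then $x^{(k)}$ drives the accept term of $D_k$ and the reject terms of all $D_{k'}$ to their respective optima \emph{simultaneously}; doing this across all $k$ optimizes every term of $L_G$, which collapse cannot. Because the target profile ``$D_k$ high, all other $D_{k'}$ low'' differs for each $k$, the optimizers $x^{(k)}$ must differ across microbatches, and since microbatch membership is fixed by the index of $z$, this forces $x=G(z)$ to vary with $z$. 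I would close by checking the $\alpha=0$ case, where the reject term vanishes, the conflict disappears, and collapse is again optimal, recovering Theorem~1 and confirming that $\alpha>0$ is exactly what breaks the tie.

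The step I expect to be the main obstacle is making ``fully minimizes its loss'' precise without the argument degenerating. Because $\log$ is unbounded below, both the collapsed and the $z$-dependent objectives can be pushed toward $-\infty$ if the frozen $D_k$ are allowed to saturate exactly at $0$ or $1$, so the clean separation needs either a mild non-degeneracy assumption on the discriminators (existence of distinct optima $x^{(k)}$ with the required response pattern) or a formulation of ``fully minimize'' as attaining each term's individual optimum at once. Pinning down this notion, and verifying that the conflicting per-microbatch optima genuinely cannot coincide, is where the real care lies; the remainder is bookkeeping already set up by (\ref{ours_minimax}).
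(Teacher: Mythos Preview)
Your approach is essentially the paper's own: the paper observes that, for each frozen $D_k$, fully minimizing $G$'s loss requires producing $x'=\arg\max_x D_k(x)$ on the in-microbatch term and $x''=\arg\min_x D_k(x)$ on the out-of-microbatch term, and since $D_k(x')\neq D_k(x'')$ forces $x'\neq x''$, collapse cannot be optimal. You reach the same conflict by substituting the collapse ansatz and showing the single value $D_k(x)$ cannot serve both roles; your additional care about what ``fully minimize'' means and the implicit non-constancy of $D_k$ is warranted, as the paper leaves those assumptions tacit.
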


\begin{proof}

From Eq.~\ref{ours_minimax}, the value function between $G$ and each $D_k$ can be expressed as

\begin{equation}
\label{proof_1}
\begin{split}
V(D_k,G) = \E_{x \sim p_r}[\log D_k(x)] + \E_{x' \sim p_g}[\log (1-D_k(x'))] \\+ \alpha \times \E_{x'' \sim p_g}[\log D_{k}(x'')].
\end{split}
\end{equation}

To fully minimize its loss in relation to $D_k$, $G$ must find

\begin{equation}
    x' = \underset{x}{\text{argmax}} D_k(x) \text{ and } x'' = \underset{x}{\text{argmin}} D_k(x),
\end{equation}


which implies

\begin{equation}
    D_k(x') \ne D_k(x'') \implies x' \ne x''.
\end{equation}


Thus, generating different outputs for different $z$ is a requirement to fully minimize $G$'s loss regarding each $D_k$. Since we sum all $V(D_k,G)$ to calculate $G$'s final loss, this also applies to overall adversarial set, concluding the proof.

\end{proof}

\subsection{Diversity Parameter $\alpha$}

We control the weight of the microbatch discrimination in the models' losses by introducing an additional diversity parameter $\alpha$. Lower $\alpha$ values lead to $G$ significantly lowering its loss by generating realistic looking samples on each microbatch without taking much consideration on the variety of the overall minibatch. On the other hand, higher $\alpha$ values induce a stronger effect on $G$'s loss if each $D$ is able to discriminate between samples inside and outside its microbatch. However, high values of $\alpha$ might compromise the realistic properties of the produced samples, since too much weight is given to the last part of Eq.~\ref{ours_minimax}, being sufficient to effectively minimize $G$'s loss. Thus, using $\alpha > 0$ represents an additional way of ensuring data variety within the minibatch produced by $G$ at each iteration. An overview of different possible $\alpha$ settings follows below.

\bigbreak
\textbf{Static $\alpha$.}
First, we statically set $\alpha$ to values between 0 and 1 throughout the whole training. For the evaluation of the effects of each $\alpha$ value, we used a toy experiment of a 2D mixture of 8 Gaussian distributions (representing 8 data modes) firstly presented by~\cite{unrolled_gan}, and further adopted by~\cite{dual_GANs}. We used 8 discriminators for all the experiments. Results are shown in Figure~\ref{fig:static_alpha}.


\begin{figure}[h]
  \begin{center}
    \includegraphics[width=0.48\textwidth]{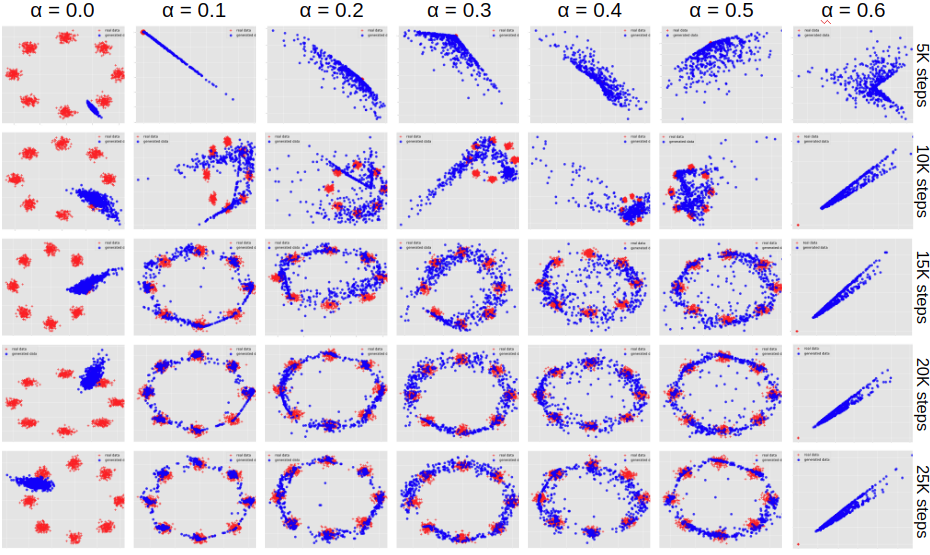}
  \end{center}
  \caption{Toy experiment using static $\alpha$ values. Real data is presented in red while generated data is in blue.}
  \label{fig:static_alpha}
\end{figure}

When setting $\alpha = 0$, $G$ mode collapses on a specific mode, showing the importance of using positive $\alpha$ values to mitigate mode collapse. When setting $0.1 \leq \alpha \leq 0.5$, $G$ is able to capture all data modes during training. However, learning problems in the early stages are observed, with $G$ only focusing on promoting variety in the generated samples. For higher $\alpha$ values ($\alpha  \geq 0.6$), $G$ was unable to produce any realistic looking samples throughout the whole training, focusing solely on sample diversity to lower its loss, suggesting the dominance of the last part of Eq.~\ref{ours_minimax}. Hence, a mild, dynamic, manipulation of $\alpha$ values seems to be necessary for a successful training of $G$, ultimately meaning both realistic and diverse samples from an early training stage.


\bigbreak
\textbf{Self-learned $\alpha$.}
We dynamically set $\alpha$ over time by adding it as a parameter of $G$ and letting it self-learn its values to lower its loss. However, we observed that $G$ takes advantage of being able to reduce its loss by increasing $\alpha$ at a large rate, focusing simply on promoting diversity in the generated samples without much realism, similarly to what was observed when using $\alpha = 0.6$ in the toy experiment (Figure~\ref{fig:static_alpha}). Hence, we suggest several properties that $\alpha$ should have so that diversity does not compromise the veracity of the generated samples. 

First, $\alpha$ should be upper bounded so that the last part of Eq.~\ref{ours_minimax} (responsible for sample diversity) does not overpower the first part (responsible for sample realism), ultimately not compromising the feedback given to $G$ to also be able to generate realistic samples. Second, $\alpha$'s growth should saturate over time, meaning that continuously increasing at large rates $\alpha$ is no longer an option to substantially decrease $G$'s loss over time. Lastly, to tackle the problem in learning of early to mid stages, we suggest that $\alpha$ should grow in a controlled fashion, so focus can also be given in the realistic aspect of the samples since the beginning of training.

Thus, we propose to make $\alpha$ a function of $\beta$, where $\alpha(\beta) \in \small[0,1\small[$, and let $G$ regulate $\beta$ instead of directly learning $\alpha$. We evaluated regulating $\alpha$ over three different functions that have the desired properties:

\begin{equation}
\alpha(\beta) =
    \begin{dcases*}
    \alpha_{sigm}(\beta) = Sigmoid(\beta), \beta \geq \beta_{sigm}\\
    \alpha_{soft}(\beta) = Softsign(\beta), \beta \geq \beta_{soft}\\
    \alpha_{tanh}(\beta) = Tanh(\beta), \beta \geq \beta_{tanh}
    \end{dcases*}
\label{gans_minimax_proposed_2}
\end{equation}




with $\beta_{sigm}$, $\beta_{soft}$, and $\beta_{tanh}$ representing the initial values of $\beta$ when training begins for the respective functions. For all the experiments of this paper, we set $\beta_{tanh} = \beta_{soft} = 0$, to obtain a positive codomain, and $\beta_{sigm} = -1.8$, since we achieved better empirical results by starting $\beta$ with this value (for further discussion about the effects of using different $\beta_{sigm}$ on $\alpha_{sigm}(\beta)$'s growth please see the Appendix). Note that learning $\alpha$ without any constraints can be characterized as using the identity function ($\alpha(\beta) = \alpha_{ident}(\beta) = \beta$).  Thus, each used function promotes a different $\alpha$ growth over time. To ease presentation, we neglect to write $\beta$'s dependence for the rest of the manuscript and use only the function names to described each $\alpha$ setting: $\alpha_{sigm}$, $\alpha_{soft}$, $\alpha_{tanh}$, and $\alpha_{ident}$.


\begin{figure}[h]
\flushright
  \subfigure[Generated samples.]{\includegraphics[width=.304\textwidth]{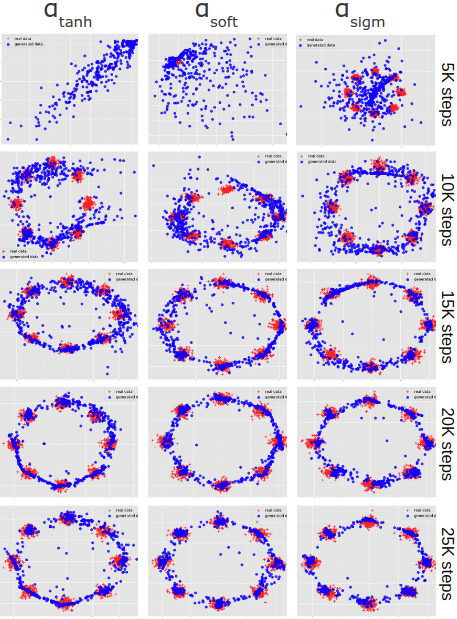}}
\begin{raggedleft}
  \subfigure[$\alpha$ evolution.]{\includegraphics[width=.158\textwidth]{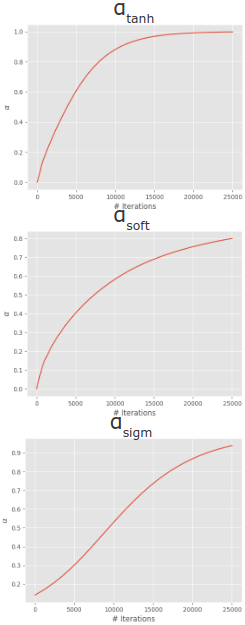}}
  \end{raggedleft}
\caption{Analysis of using different $\alpha$ functions on the toy dataset. The generated samples are shown in (a). The evolution of $\alpha$ on each function is presented in (b).}
\label{fig:tanh_softsign_sigmoid_toy_dataset}
\end{figure}

Results on the toy dataset using the different proposed $\alpha$ functions are shown in Figure~\ref{fig:tanh_softsign_sigmoid_toy_dataset}. The benefits of increasing $\alpha$ in a milder fashion, as performed when using $\alpha_{sigm}$, are observed especially early on training, with $G$ being concerned with the realism of the generated samples. On the other hand, when using $\alpha_{tanh}$ and $\alpha_{soft}$, the network takes longer to focus on the data realism (10K steps) since it is able to reduce its loss significantly by simply promoting variety due to the steeper growth of $\alpha$ in the earlier stages on both functions. Nevertheless, as the functions gradually saturate, all $\alpha$ settings manage to eventually capture the real data distribution while still keeping the diversity in the generated samples.

In conclusion, one can summarize microbatchGAN's training using these variations of self-learned $\alpha$ as the following: in the first iterations, $G$ increases $\alpha$ to reduce its loss, expanding its output. As $\alpha$ starts to saturate and each $D$ learns how to distinguish between real and fake samples, $G$ is forced to lower its loss by creating both realistic and diverse samples. 

\section{Experimental Results}
\label{sec:results}

We validated the effects of using different $\alpha$ functions on MNIST~\cite{mnist}, CIFAR-10~\cite{cifar10}, and cropped CelebA~\cite{celebA}. To quantitatively evaluate such effects, we used the Fr\'echet Inception Distance~\cite{fid}, or FID, since it has been shown to be sensitive to image quality as well as mode collapse~\cite{are_all_created_gans_equal}, with the returned distance increasing notably when modes are missing from the generated data.
We used several variations of the standard FID for a thorough study of $\alpha$'s effects in training, as well as the influence of using a different number of discriminators in our framework.

\subsection{Intra FID}

To measure the variety of samples of the generated set, we propose to calculate the FID between two subsets of 10K randomly picked fake samples generated at the end of every thousand iterations. We call this metric Intra FID. Important to note that Intra FID only measures the diversity in the generated set, not its realism. Hence, higher values indicate more diversity within the generated samples while lower values might indicate mode collapse in the generated set. The relation between Intra FID and progressive values of $\alpha$ is shown in Figure~\ref{fig:intra_fid_results}.

\begin{figure*}[h]
\centering
\includegraphics[width=0.90\textwidth]{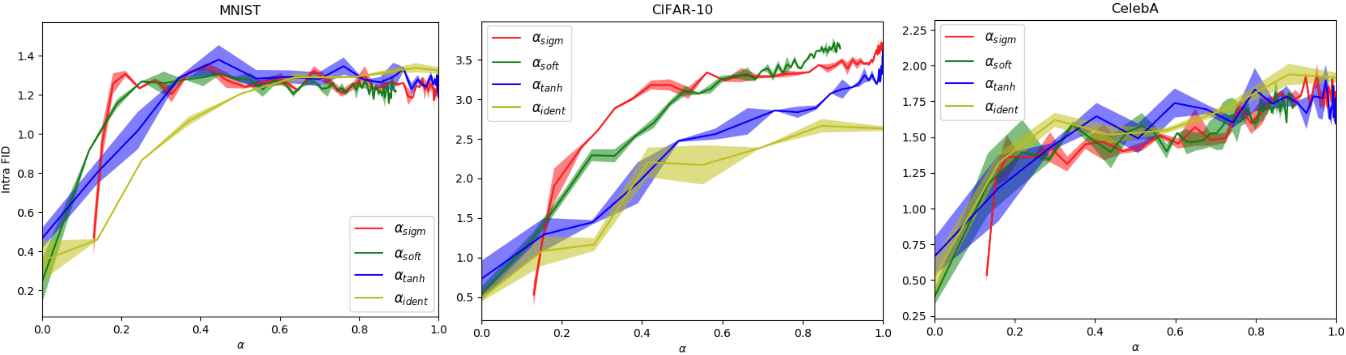}
\caption{Intra FID as $\alpha$ progresses. Higher values represent higher variety in the generated set.}
\label{fig:intra_fid_results}
\end{figure*}

We observe a strong correlation between $\alpha$'s growth and variety in the set, especially in beginning to mid-training. Later on, as $\alpha$ saturates, the variety is kept (represented by the stability of the Intra FID). It is further visible that $\alpha_{sigm}$, $\alpha_{soft}$, and $\alpha_{tanh}$ converge to similar Intra FID on all datasets. Important to note, that, to ease the visualization, the graphs only represent $0 \leq \alpha \leq 1$, with $\alpha_{ident}$'s values naturally surpassing 1 as time progresses.

\subsection{Cumulative Intra FID}

To analyze the sample variety over time, we summed the Intra FID values obtained from every thousand iterations. Hence, higher values indicate that the model was able to promote more variety in the set across time. Results are shown in Figure~\ref{fig:cum_intra_fid_results}, where we observe that using more discriminators leads to more variety across all datasets and $\alpha$ functions. Moreover, using $\alpha = 0$ leads to lower variety compared to using positive $\alpha$ values, with $\alpha_{sigm}$, $\alpha_{soft}$, and $\alpha_{tanh}$ obtaining similar values throughout the different datasets. Even though $\alpha_{ident}$ promotes the highest variety, the generated samples lack realism, as previously witnessed in the toy experiment and further discussed next.

\begin{figure*}[h]
\centering
\includegraphics[width=0.8\textwidth]{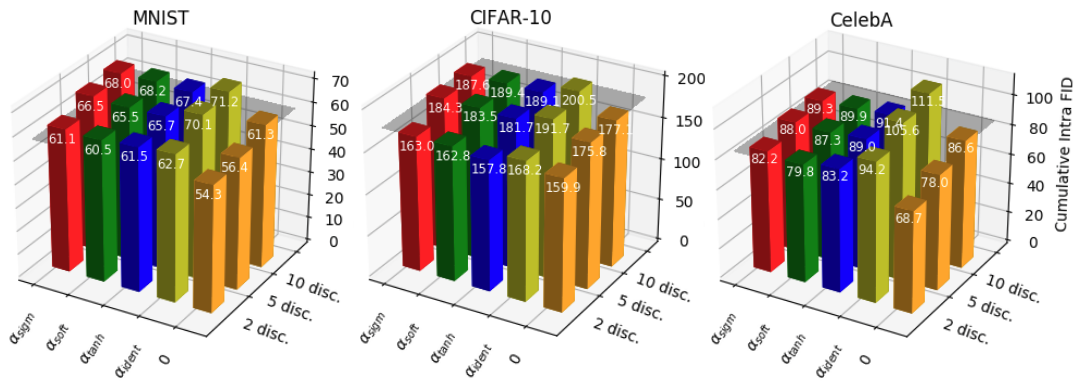}
\caption{Cumulative Intra FID using a different number of discriminators and $\alpha$ functions on the different datasets. Higher values correlate to higher variety in the produced samples across time. Values obtained using standard GANs are represented by the grey plane as a baseline.}
\label{fig:cum_intra_fid_results}
\end{figure*}

\subsection{Mean and Minimum FID}

To analyze both the realism and variety of the generated samples, we used the standard FID calculated between 10K fake samples and the real training data. Lower values should indicate both diversity and high-quality samples. The Mean FID and Minimum FID across 50K iterations are presented in Table~\ref{mean_min_fid_table} for each dataset. We observe that the best values, both in terms of mean and minimum, are obtained when using a higher number of discriminators, \textit{i.e., }5 or 10, and $\alpha_{tanh}$, $\alpha_{soft}$, and $\alpha_{sigm}$. Moreover, the high distances obtained when using $\alpha_{ident}$ confirm the lack of realism of the generated samples, highlighting the importance of constraining $\alpha$ by the properties previously stated in Section~\ref{sec:solution}.

\begin{table*}[h]
\vskip 0.15in
\begin{center}
\begin{small}
\begin{sc}
\centering
\begin{adjustbox}{width=0.8\textwidth}
\begin{tabular}{lc|cccccr}
\toprule
\multicolumn{2}{c|}{microbatchGAN} & \multicolumn{2}{c}{MNIST} & \multicolumn{2}{c}{CIFAR-10} & \multicolumn{2}{c}{CelebA}\\
\cmidrule(lr){1-2}\cmidrule(lr){3-4}\cmidrule(lr){5-6}\cmidrule(lr){7-8}
{K} & {$\alpha$} & {Mean FID} & {Min FID}  & {Mean FID} & {Min FID} & {Mean FID} & {Min FID} \\
\midrule
1 & -  & 50.9  $\pm$ 9.7 & 22.7 $\pm$ 0.7 & 125.5 $\pm$ 1.5 & 84.8 $\pm$ 1.6 & 77.3 $\pm$ 1.7 & 38.5 $\pm$ 1.1\\
\midrule
2 & $\alpha_{sigm}$ & \underline{37.6 $\pm$ 1.1} & \underline{23.5 $\pm$ 3.0} &  111.9 $\pm$ 0.1 & 90.8 $\pm$ 0.6 & 76.3 $\pm$ 0.6 & 53.0 $\pm$ 2.6\\
2 & $\alpha_{soft}$ & 41.9 $\pm$ 1.2 & 24.6 $\pm$ 0.0 & \underline{110.2 $\pm$ 0.9} & \underline{90.6 $\pm$ 1.2} & \underline{74.7 $\pm$ 2.9} & \underline{49.5 $\pm$ 0.1}\\
2 & $\alpha_{tanh}$ & 43.9 $\pm$ 0.8 & 27.2 $\pm$ 0.5 & 115.3 $\pm$ 0.5 & 91.3 $\pm$ 0.4 & 87.1 $\pm$ 2.4 & 54.7 $\pm$ 0.8\\
2 & $\alpha_{ident}$ & 89.1 $\pm$ 2.2 & 53.6 $\pm$ 2.9 & 168.1 $\pm$ 2.0 & 113.2 $\pm$ 2.2 & 206.1 $\pm$ 3.5 & 113.6 $\pm$ 5.2\\
\midrule
5 & $\alpha_{sigm}$ & \textbf{\underline{34.7 $\pm$ 0.3}} &  20.1 $\pm$ 0.1 & \textbf{\underline{103.9 $\pm$ 1.8}} & 81.4 $\pm$ 1.1 & \textbf{\underline{66.5 $\pm$ 0.6}} & \underline{40.4 $\pm$ 3.1}\\
5 & $\alpha_{soft}$ & 37.2 $\pm$ 0.3 & \underline{19.4 $\pm$ 0.1} & 106.4 $\pm$ 0.8 & 82.5 $\pm$ 1.2 & 69.1 $\pm$ 0.3 & 42.0 $\pm$ 2.0\\
5 & $\alpha_{tanh}$ & 39.4 $\pm$ 1.1 & 20.0 $\pm$ 0.1 & 107.2 $\pm$ 0.8 & \underline{80.8 $\pm$ 0.6} & 70.3 $\pm$ 1.3 & 42.8 $\pm$ 0.5\\
5 & $\alpha_{ident}$ & 61.2 $\pm$ 0.3 & 37.3 $\pm$ 0.2 & 127.9 $\pm$ 0.4 & 97.5 $\pm$ 2.8 & 135.9 $\pm$ 1.1 & 77.5 $\pm$ 2.0\\
\midrule
10 & $\alpha_{sigm}$ & 38.9 $\pm$ 3.0 & 18.0 $\pm$ 0.1 & \underline{110.2 $\pm$ 1.7} & 79.0 $\pm$ 0.7 & 68.4 $\pm$ 0.1 & 34.8 $\pm$ 1.2\\
10 & $\alpha_{soft}$ & \underline{36.2 $\pm$ 0.9} & \textbf{\underline{17.1 $\pm$ 0.2}} & 110.8  $\pm$ 0.4 & 79.2 $\pm$ 0.5 & \underline{67.8 $\pm$ 2.6} & \textbf{\underline{34.5 $\pm$ 0.2}}\\
10 & $\alpha_{tanh}$ & 37.4 $\pm$ 1.2 & 17.4 $\pm$ 0.2 & 112.8 $\pm$ 1.7 & \textbf{\underline{77.7 $\pm$ 0.6}} & 71.0 $\pm$ 1.4 & 34.5 $\pm$ 0.3\\
10 & $\alpha_{ident}$ & 48.7 $\pm$ 0.9 & 28.7 $\pm$ 0.1 & 117.0 $\pm$ 0.2 & 87.1 $\pm$ 1.0 & 91.4 $\pm$ 0.2 & 45.4 $\pm$ 0.1\\
\bottomrule
\end{tabular}
\end{adjustbox}
\end{sc}
\end{small}
\end{center}
\caption{Mean and Minimum FID over 50K iterations on the different datasets.}
\label{mean_min_fid_table}
\end{table*}

\subsection{Generated samples}

The generated samples on each dataset using 1 and 10 discriminators with different $\alpha$ are presented in Figure~\ref{fig:generated_samples_mnist_cifar_celebA}. For an objective assessment of the variety by the end of each iteration, the Intra FID is also provided. We observe the superiority of the generated samples, both in terms of realism and variety, when using $\alpha_{sigm}$, $\alpha_{soft}$, and $\alpha_{tanh}$ on all datasets. However, $\alpha_{tanh}$ seems to show a delayed ability in generating realistic samples, possibly due to the increase of $\alpha$ at a steeper fashion. The inability of generating realistic samples when using $\alpha_{ident}$ is also clearly detected on all datasets, as previously discussed. More importantly, the high variety on the generated set, observed by the high Intra FID, is witnessed since very early iterations when using $\alpha_{sigm}$, $\alpha_{soft}$, and $\alpha_{tanh}$. The observed mitigation of mode collapse is carried out throughout the whole training. 

\begin{figure*}[h]
\centering
\includegraphics[width=1.0\textwidth]{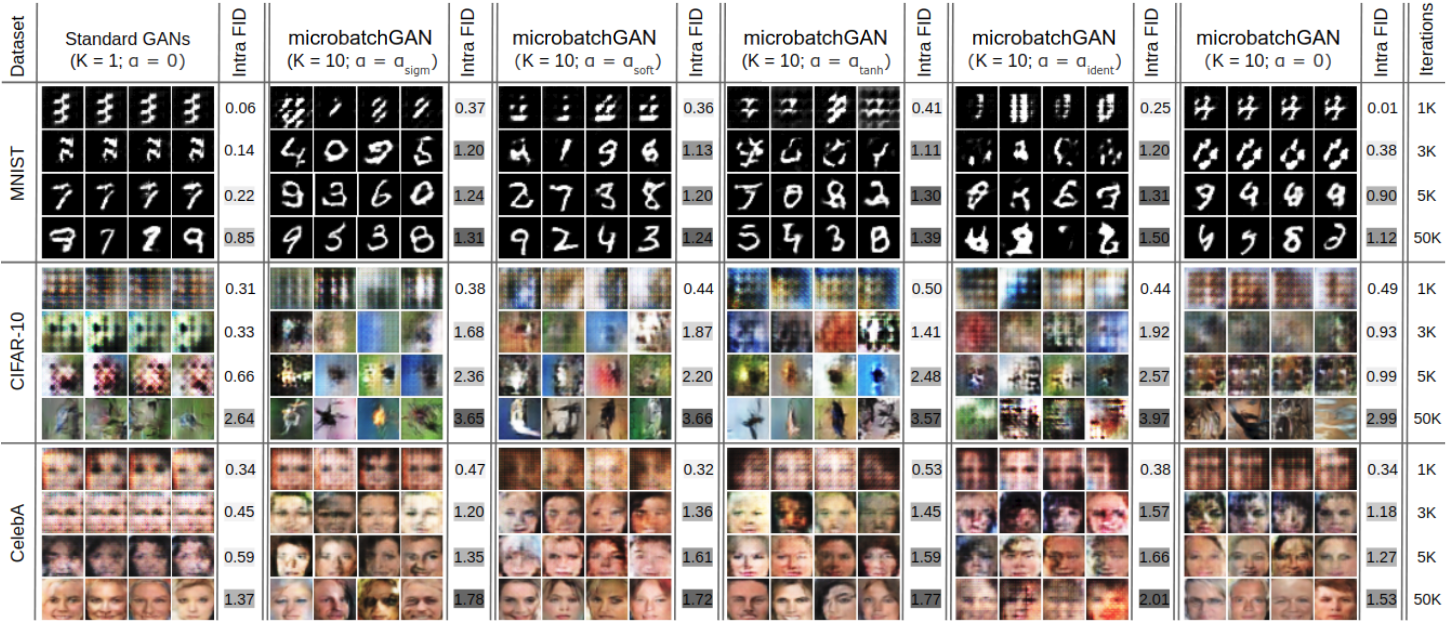}
\caption{Generated samples from 1K, 2K, 5K and 50K iteration with the respective Intra FID.}
\label{fig:generated_samples_mnist_cifar_celebA}
\end{figure*}

When using standard GANs, we notice severe mode collapse, especially early on training. When using 10 discriminators and $\alpha$ set to 0, we notice a slight variation in the generated set, yet, this is only detected after a decent number of iterations, when each $D$ has seen enough samples to guide its judgment to a specific data mode due to the usage of different microbatch for each $D$, delaying sample variety substantially. Thus, using positive $\alpha$ values is shown to be a necessary measure to stimulate variety since the beginning and until the end of training.

\section{Method Comparison}
\label{sec:comparision}

We proceeded to compare different settings of microbatchGAN to other existing methods on 3 different datasets: CIFAR-10, STL-10~\cite{coates2011analysis}, and ImageNet~\cite{russakovsky2015imagenet}. We down-sampled the images of the last two datasets down to 32x32 pixels. We used Inception Score~\cite{is} or IS (higher is better) as the first quantitative metric. Even though IS has been shown to be less correlated with human judgment than FID, most previous works only report results on this metric, making it a useful measure for model comparisons. Out of fairness to the single discriminator methods that we compare our method against, we used only 2 discriminators in our experiments. The architectures and training settings used for all the experiments can be found in the Appendix.

The comparison results are shown in Table~\ref{is_comparision}. We point special attention to the underlined method representing standard GANs, since it was the only method executed with our own implementation and identical training settings as microbatchGAN. Thus, this represents the only method directly comparable to ours. We notice a fair improvement of IS on all the tested datasets, observing an increase up to around 15\% for CIFAR-10, 7\% for STL-10, and 5\% for ImageNet. This indicates the success of our approach on improving the standard GANs framework on multiple datasets with different sizes and challenges.

\begin{table}[h]
\begin{center}
\begin{small}
\begin{sc}
\centering
\begin{adjustbox}{width=0.48\textwidth}
\begin{tabular}{lcccr}
\toprule
 & CIFAR-10 & STL-10 & ImageNet\\
\midrule
Real data    &  11.24 & 26.08 & 25.78 \\
\midrule
WGAN~\cite{wgan} & 3.82 & - & -\\
MIX+WGAN~\cite{generalization_and_equilibrium_in_gans} & 4.04 & - & -\\
ALI~\cite{ali} & 5.34 & - & -\\
BEGAN~\cite{berthelot2017began} & 5.62 & - & -\\
MAGAN~\cite{wang2017magan} & 5.67 & - & -\\
GMAN (K = 2)~\cite{gman} & 5.87 & - & -\\
\underline{GANs*~\cite{gans}} & \underline{5.92} & \underline{6.78} & \underline{7.04}\\
Dropout-GAN (K = 2)~\cite{mordido2018dropout} & 5.98 & - & -\\
GMAN (K = 5)~\cite{gman} & 6.00 & - & -\\
Dropout-GAN (K = 5)~\cite{mordido2018dropout} & 6.05 & - & -\\
DCGAN~\cite{dcgan} & 6.40  & 7.54 & 7.89\\
Improved-GAN~\cite{is} & 6.86 & - & -\\
D2GAN~\cite{dual_GANs} & 7.15 & 7.98 & 8.25\\
DFM~\cite{warde2016improving} & 7.72 & 8.51 & 9.18\\
MGAN~\cite{multi_generator_gans} & 8.33 & 9.22 & 9.32 \\
\midrule
microbatchGAN ($K = 2; \alpha = \alpha_{sigm}$) & 6.77 & 7.23 & 7.32 \\
microbatchGAN ($K = 2; \alpha = \alpha_{soft}$) & 6.66 & 7.19 & 7.40 \\
microbatchGAN ($K = 2; \alpha = \alpha_{tanh}$) & 6.61 & 7.07 & 7.40 \\
\bottomrule
\end{tabular}
\end{adjustbox}
\end{sc}
\end{small}
\end{center}
\caption{Inception scores. For a fair comparison, only unsupervised methods are compared.}
\label{is_comparision}
\end{table}

On CIFAR-10, microbatchGAN achieves competitive results, significantly outperforming GMAN with 5 discriminators while using a similar architecture. We argue that the use of more powerful architectures in the higher ranked methods plays a big role in their end score, especially for DCGAN. Nonetheless, we acknowledge that using different objectives for each $D$ (as proposed in D2GAN) seems to be beneficial in a multi-discriminator setting, representing a good path to follow in the future. Moreover, we observe that using extra autoencoders (DFM) or classifiers (MGAN) in the framework can help to achieve a better performance in the end. However, we note that MGAN makes use of a 10 generator framework, on top of an extra classifier, to achieve the presented results. 
Furthermore, the generated samples presented in their paper (\cite{multi_generator_gans}) indicate signs of partial mode collapse, which is not reflected in its high IS. 


\begin{table}[h]
\begin{center}
\begin{small}
\begin{sc}
\centering
\begin{tabular}{lcccr}
\toprule
  & CIFAR-10\\
\midrule
GANs~\cite{gans}  & 70.23\\
mod-GANs~\cite{gans} & 79.58\\
LSGAN~\cite{mao2017least}  & 83.66\\
DRAGAN~\cite{kodali2017convergence}  & 80.57\\
\midrule
GANs ($K = 2$)  & 74.07\\
mod-GANs ($K = 2$) & 71.96\\
LSGAN ($K = 2$)  & 73.33\\
DRAGAN ($K = 2$)  & 75.83\\
Dropout-GANs ($K = 2$) & 66.82\\
Dropout-mod-GANs ($K = 2$) & 67.57\\
Dropout-LSGAN ($K = 2$) & 69.37\\
Dropout-DRAGAN ($K = 2$) & 66.90\\
\midrule
microbatchGAN ($K = 2; \alpha = \alpha_{sigm}$)  & 66.93\\
microbatchGAN ($K = 2; \alpha = \alpha_{soft}$)  & 65.54\\
microbatchGAN ($K = 2; \alpha = \alpha_{tanh}$)  & 65.84\\
\bottomrule
\end{tabular}
\end{sc}
\end{small}
\end{center}
\caption{Minimum FID comparison.}
\label{fid_comparision}
\end{table}

We further compared our best FID with a subset of the reported methods in~\cite{are_all_created_gans_equal}, namely GANs, both with the original and modified objective, LSGAN, and DRAGAN on CIFAR-10. These methods were chosen since they represent interesting variants of standard GANs, as presented in~\cite{are_all_created_gans_equal}.

\begin{figure}[h]
  \begin{center}
    \includegraphics[width=0.48\textwidth]{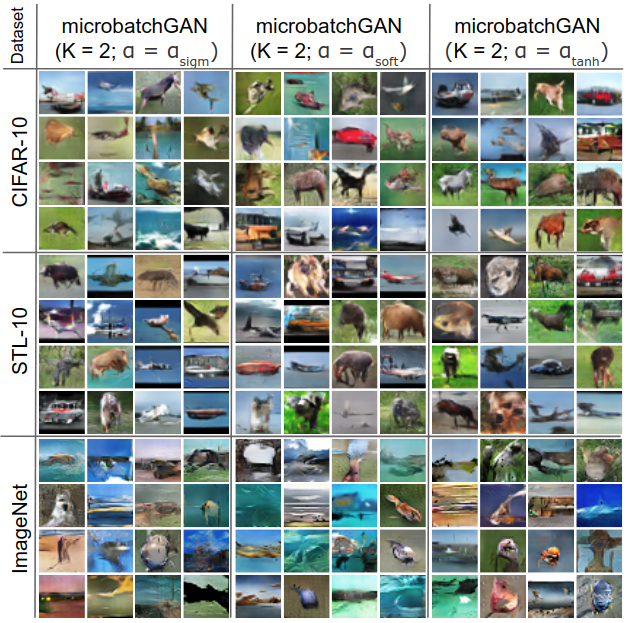}
  \end{center}
  \caption{CIFAR-10, STL-10, and ImageNet results.}
  \label{fig:cifar_stl_imagenet_results}
\end{figure}

We extended each method to an ensemble of discriminators, for a fair comparison to our multiple discriminator approach. Furthermore, we compare against additional results with adversarial dropout at a dropout rate of $0.5$, as proposed in \cite{mordido2018dropout}. We used the same architecture of the last experiment for all methods. Results are shown in Table~\ref{fid_comparision}. We observe that all variants of microbatchGAN outperform the rest of the compared methods under controlled and equal experiments.

A subset of the generated samples produced by the different variations of microbatchGAN reported in Table~\ref{is_comparision} are shown in Figure~\ref{fig:cifar_stl_imagenet_results}, where we observe high variety and realism across all generated sets. Extended results are provided in the Appendix.

\section{Conclusions}

In this work, we present a novel framework, named microbatchGAN, where each $D$ performs microbatch discrimination, differentiating between samples within and outside its fake microbatch. This behavior is enforced by the diversity parameter $\alpha$, that is indirectly self-learned by $G$. In the first iterations, $G$ increases $\alpha$ to lower its loss, expanding its output. Then, as $\alpha$ gradually saturates and each $D$ learns how to better distinguish between real and fake samples, $G$ is forced to fool each $D$ by promoting realism in its output, while keeping the diversity in the generated set. We show evidence that our solution produces realistic and diverse samples on multiple datasets of different sizes and nature, ultimately mitigating mode collapse.

{\small
\bibliographystyle{ieee}
\bibliography{egbib}

\begin{thebibliography}{}\itemsep=-1pt

\end{thebibliography}


\begin{thebibliography}{1}\itemsep=-1pt

\bibitem{Alpher02}
A.~Alpher.
\newblock Frobnication.
\newblock {\em Journal of Foo}, 12(1):234--778, 2002.

\bibitem{Alpher03}
A.~Alpher and J.~P.~N. Fotheringham-Smythe.
\newblock Frobnication revisited.
\newblock {\em Journal of Foo}, 13(1):234--778, 2003.

\bibitem{Alpher04}
A.~Alpher, J.~P.~N. Fotheringham-Smythe, and G.~Gamow.
\newblock Can a machine frobnicate?
\newblock {\em Journal of Foo}, 14(1):234--778, 2004.

\bibitem{Authors06b}
Authors.
\newblock Frobnication tutorial, 2006.
\newblock Supplied as additional material {\tt tr.pdf}.

\bibitem{Authors06}
Authors.
\newblock The frobnicatable foo filter, 2011.
\newblock Face and Gesture submission ID 324. Supplied as additional material
  {\tt fg324.pdf}.

\end{thebibliography}


\begin{thebibliography}{10}\itemsep=-1pt

\bibitem{towards}
M.~Arjovsky and L.~Bottou.
\newblock Towards principled methods for training generative adversarial
  networks.
\newblock In {\em International Conference on Learning Representations (ICLR
  2017)}, 2017.

\bibitem{wgan}
M.~Arjovsky, S.~Chintala, and L.~Bottou.
\newblock {W}asserstein generative adversarial networks.
\newblock In D.~Precup and Y.~W. Teh, editors, {\em Proceedings of the 34th
  International Conference on Machine Learning}, volume~70 of {\em Proceedings
  of Machine Learning Research}, pages 214--223, International Convention
  Centre, Sydney, Australia, 06--11 Aug 2017. PMLR.

\bibitem{generalization_and_equilibrium_in_gans}
S.~Arora, R.~Ge, Y.~Liang, T.~Ma, and Y.~Zhang.
\newblock Generalization and equilibrium in generative adversarial nets
  ({GAN}s).
\newblock In D.~Precup and Y.~W. Teh, editors, {\em Proceedings of the 34th
  International Conference on Machine Learning}, volume~70 of {\em Proceedings
  of Machine Learning Research}, pages 224--232, International Convention
  Centre, Sydney, Australia, 06--11 Aug 2017. PMLR.

\bibitem{berthelot2017began}
D.~Berthelot, T.~Schumm, and L.~Metz.
\newblock Began: Boundary equilibrium generative adversarial networks.
\newblock {\em arXiv preprint arXiv:1703.10717}, 2017.

\bibitem{mode_regularized_gans}
T.~Che, Y.~Li, A.~P. Jacob, Y.~Bengio, and W.~Li.
\newblock Mode regularized generative adversarial networks.
\newblock {\em CoRR}, abs/1612.02136, 2016.

\bibitem{coates2011analysis}
A.~Coates, A.~Ng, and H.~Lee.
\newblock An analysis of single-layer networks in unsupervised feature
  learning.
\newblock In {\em Proceedings of the fourteenth international conference on
  artificial intelligence and statistics}, pages 215--223, 2011.

\bibitem{russakovsky2015imagenet}
J.~Deng, W.~Dong, R.~Socher, L.~jia Li, K.~Li, and L.~Fei-fei.
\newblock Imagenet: A large-scale hierarchical image database.
\newblock In {\em In CVPR}, 2009.

\bibitem{donahue2018synthesizing}
C.~Donahue, J.~McAuley, and M.~Puckette.
\newblock Synthesizing audio with generative adversarial networks.
\newblock {\em arXiv preprint arXiv:1802.04208}, 2018.

\bibitem{ali}
V.~Dumoulin, I.~Belghazi, B.~Poole, O.~Mastropietro, A.~Lamb, M.~Arjovsky, and
  A.~Courville.
\newblock Adversarially learned inference.
\newblock {\em arXiv preprint arXiv:1606.00704}, 2016.

\bibitem{gman}
I.~P. Durugkar, I.~Gemp, and S.~Mahadevan.
\newblock Generative multi-adversarial networks.
\newblock {\em CoRR}, abs/1611.01673, 2016.

\bibitem{edwards2015censoring}
H.~Edwards and A.~Storkey.
\newblock Censoring representations with an adversary.
\newblock {\em arXiv preprint arXiv:1511.05897}, 2015.

\bibitem{gans}
I.~Goodfellow, J.~Pouget-Abadie, M.~Mirza, B.~Xu, D.~Warde-Farley, S.~Ozair,
  A.~Courville, and Y.~Bengio.
\newblock Generative adversarial nets.
\newblock In Z.~Ghahramani, M.~Welling, C.~Cortes, N.~D. Lawrence, and K.~Q.
  Weinberger, editors, {\em Advances in Neural Information Processing Systems
  27}, pages 2672--2680. Curran Associates, Inc., 2014.

\bibitem{fid}
M.~Heusel, H.~Ramsauer, T.~Unterthiner, B.~Nessler, and S.~Hochreiter.
\newblock Gans trained by a two time-scale update rule converge to a local nash
  equilibrium.
\newblock In I.~Guyon, U.~V. Luxburg, S.~Bengio, H.~Wallach, R.~Fergus,
  S.~Vishwanathan, and R.~Garnett, editors, {\em Advances in Neural Information
  Processing Systems 30}, pages 6629--6640. Curran Associates, Inc., 2017.

\bibitem{ho2016generative}
J.~Ho and S.~Ermon.
\newblock Generative adversarial imitation learning.
\newblock In {\em Advances in Neural Information Processing Systems}, pages
  4565--4573, 2016.

\bibitem{multi_generator_gans}
Q.~Hoang, T.~D. Nguyen, T.~Le, and D.~Q. Phung.
\newblock Multi-generator generative adversarial nets.
\newblock {\em CoRR}, abs/1708.02556, 2017.

\bibitem{cross_domain_gans}
T.~Kim, M.~Cha, H.~Kim, J.~K. Lee, and J.~Kim.
\newblock Learning to discover cross-domain relations with generative
  adversarial networks.
\newblock In D.~Precup and Y.~W. Teh, editors, {\em Proceedings of the 34th
  International Conference on Machine Learning}, volume~70 of {\em Proceedings
  of Machine Learning Research}, pages 1857--1865, International Convention
  Centre, Sydney, Australia, 06--11 Aug 2017. PMLR.

\bibitem{kodali2017convergence}
N.~Kodali, J.~Abernethy, J.~Hays, and Z.~Kira.
\newblock On convergence and stability of gans.
\newblock {\em arXiv preprint arXiv:1705.07215}, 2017.

\bibitem{cifar10}
A.~Krizhevsky.
\newblock Learning multiple layers of features from tiny images.
\newblock Technical report, 2009.

\bibitem{mnist}
Y.~LeCun and C.~Cortes.
\newblock {MNIST} handwritten digit database.
\newblock 2010.

\bibitem{mmd_gan}
C.-L. Li, W.-C. Chang, Y.~Cheng, Y.~Yang, and B.~Poczos.
\newblock Mmd gan: Towards deeper understanding of moment matching network.
\newblock In I.~Guyon, U.~V. Luxburg, S.~Bengio, H.~Wallach, R.~Fergus,
  S.~Vishwanathan, and R.~Garnett, editors, {\em Advances in Neural Information
  Processing Systems 30}, pages 2200--2210. Curran Associates, Inc., 2017.

\bibitem{lin2017pacgan}
Z.~Lin, A.~Khetan, G.~Fanti, and S.~Oh.
\newblock Pacgan: The power of two samples in generative adversarial networks.
\newblock {\em arXiv preprint arXiv:1712.04086}, 2017.

\bibitem{celebA}
Z.~Liu, P.~Luo, X.~Wang, and X.~Tang.
\newblock Deep learning face attributes in the wild.
\newblock In {\em Proceedings of International Conference on Computer Vision
  (ICCV)}, 2015.

\bibitem{are_all_created_gans_equal}
M.~Lucic, K.~Kurach, M.~Michalski, S.~Gelly, and O.~Bousquet.
\newblock Are gans created equal? a large-scale study.
\newblock {\em arXiv preprint arXiv:1711.10337}, 2017.

\bibitem{mao2017least}
X.~Mao, Q.~Li, H.~Xie, R.~Y. Lau, Z.~Wang, and S.~P. Smolley.
\newblock Least squares generative adversarial networks.
\newblock In {\em Computer Vision (ICCV), 2017 IEEE International Conference
  on}, pages 2813--2821. IEEE, 2017.

\bibitem{the_numerics_of_gans}
L.~Mescheder, S.~Nowozin, and A.~Geiger.
\newblock The numerics of gans.
\newblock In {\em Proceedings Neural Information Processing Systems}, 2017.

\bibitem{unrolled_gan}
L.~Metz, B.~Poole, D.~Pfau, and J.~Sohl{-}Dickstein.
\newblock Unrolled generative adversarial networks.
\newblock {\em CoRR}, abs/1611.02163, 2016.

\bibitem{mordido2018dropout}
G.~Mordido, H.~Yang, and C.~Meinel.
\newblock Dropout-gan: Learning from a dynamic ensemble of discriminators.
\newblock {\em arXiv preprint arXiv:1807.11346}, 2018.

\bibitem{mroueh2017mcgan}
Y.~Mroueh, T.~Sercu, and V.~Goel.
\newblock Mcgan: Mean and covariance feature matching gan.
\newblock {\em arXiv preprint arXiv:1702.08398}, 2017.

\bibitem{multiple_random_projection_gans}
B.~Neyshabur, S.~Bhojanapalli, and A.~Chakrabarti.
\newblock Stabilizing {GAN} training with multiple random projections.
\newblock {\em CoRR}, abs/1705.07831, 2017.

\bibitem{dual_GANs}
T.~D. Nguyen, T.~Le, H.~Vu, and D.~Q. Phung.
\newblock Dual discriminator generative adversarial nets.
\newblock {\em CoRR}, abs/1709.03831, 2017.

\bibitem{nowozin2016f}
S.~Nowozin, B.~Cseke, and R.~Tomioka.
\newblock f-gan: Training generative neural samplers using variational
  divergence minimization.
\newblock In {\em Advances in Neural Information Processing Systems}, pages
  271--279, 2016.

\bibitem{dcgan}
A.~Radford, L.~Metz, and S.~Chintala.
\newblock Unsupervised representation learning with deep convolutional
  generative adversarial networks.
\newblock {\em arXiv preprint arXiv:1511.06434}, 2015.

\bibitem{is}
T.~Salimans, I.~Goodfellow, W.~Zaremba, V.~Cheung, A.~Radford, and X.~Chen.
\newblock Improved techniques for training gans.
\newblock In {\em Advances in Neural Information Processing Systems}, pages
  2234--2242, 2016.

\bibitem{springenberg2015unsupervised}
J.~T. Springenberg.
\newblock Unsupervised and semi-supervised learning with categorical generative
  adversarial networks.
\newblock {\em arXiv preprint arXiv:1511.06390}, 2015.

\bibitem{sutherland2016generative}
D.~J. Sutherland, H.-Y. Tung, H.~Strathmann, S.~De, A.~Ramdas, A.~Smola, and
  A.~Gretton.
\newblock Generative models and model criticism via optimized maximum mean
  discrepancy.
\newblock {\em arXiv preprint arXiv:1611.04488}, 2016.

\bibitem{uehara2016generative}
M.~Uehara, I.~Sato, M.~Suzuki, K.~Nakayama, and Y.~Matsuo.
\newblock Generative adversarial nets from a density ratio estimation
  perspective.
\newblock {\em arXiv preprint arXiv:1610.02920}, 2016.

\bibitem{coulomb_gan}
T.~Unterthiner, B.~Nessler, C.~Seward, G.~Klambauer, M.~Heusel, H.~Ramsauer,
  and S.~Hochreiter.
\newblock Coulomb gans: Provably optimal nash equilibria via potential fields.
\newblock {\em arXiv preprint arXiv:1708.08819}, 2017.

\bibitem{wang2017magan}
R.~Wang, A.~Cully, H.~J. Chang, and Y.~Demiris.
\newblock Magan: Margin adaptation for generative adversarial networks.
\newblock {\em arXiv preprint arXiv:1704.03817}, 2017.

\bibitem{warde2016improving}
D.~Warde-Farley and Y.~Bengio.
\newblock Improving generative adversarial networks with denoising feature
  matching.
\newblock 2016.

\bibitem{yang2017improving}
Z.~Yang, W.~Chen, F.~Wang, and B.~Xu.
\newblock Improving neural machine translation with conditional sequence
  generative adversarial nets.
\newblock {\em arXiv preprint arXiv:1703.04887}, 2017.

\bibitem{yu2017seqgan}
L.~Yu, W.~Zhang, J.~Wang, and Y.~Yu.
\newblock Seqgan: Sequence generative adversarial nets with policy gradient.
\newblock In {\em AAAI}, pages 2852--2858, 2017.

\bibitem{energy_gan}
J.~J. Zhao, M.~Mathieu, and Y.~LeCun.
\newblock Energy-based generative adversarial network.
\newblock {\em CoRR}, abs/1609.03126, 2016.

\end{thebibliography}
}

\newpage
\newpage\null\thispagestyle{empty}\newpage
\appendix

\section{Training settings}

The architectural and training settings used in Sections~\ref{sec:solution},~\ref{sec:results}, and~\ref{sec:comparision} are presented in Tables~\ref{arc_toy}, \ref{arc_mnist_cifar_celeba}, and~\ref{arc_cifar_stl_imagenet}, respectively. For the FID comparison on CIFAR-10 and CelebA in Section~\ref{sec:comparision}, we used the same architectures as Table~\ref{arc_cifar_stl_imagenet} but with a batch size of 64 on both datasets, and ran for 78K iterations on CIFAR-10 and 125K iterations on CelebA.

\begin{table*}[b]
\caption{Training settings for the toy dataset.}
\label{arc_toy}
\begin{center}
\begin{small}
\begin{sc}
\begin{tabular}{lcccccr}
\toprule
 & Feature maps & Nonlinearity\\
\midrule
$G(z): z \sim Normal(0,I)$    & 256 & \\
Fully connected    & 128 & ReLu\\
Fully connected    & 128 & ReLu\\
Fully connected    & 2 & Linear\\
\midrule
$D(x)$    & 2 &\\
Fully connected    & 128 & ReLu\\
Fully connected    & 1 & Softplus\\
\midrule
Number of discriminators & 8 & &  &  & \\
$\alpha$ (static) & $\{0, 0.1, 0.2, 0.3, 0.4, 0.5, 0.6, 0.7, 0.8, 0.9, 1.0\}$ & &  &  & \\
$\alpha$ (self-learned) & $\{\alpha_{sigm}, \alpha_{soft}, \alpha_{tanh}, \alpha_{ident}\}$  & &  &  & \\
Batch size & 512 & & & & \\
Iterations & 25K & & & & \\
Optimizer & Adam ($lr = 0.0002, \beta_1 = 0.5$) & & & & \\
\bottomrule
\end{tabular}
\end{sc}
\end{small}
\end{center}
\end{table*}

\begin{table*}[t]
\caption{Training settings for MNIST, CIFAR-10, and CelebA.}
\label{arc_mnist_cifar_celeba}
\begin{center}
\begin{small}
\begin{sc}
\centering
\begin{adjustbox}{width=1\textwidth}
\begin{tabular}{lcccccr}
\toprule
 & Kernel & Strides & Feature maps & Batch Norm. & Nonlinearity\\
\midrule
$G(z): z \sim Uniform[-1,1]$    & - & - & 100 & - & -\\
Transposed convolution    & $3\times3$ & $4\times4$ & 128 & Yes & ReLu\\
Transposed convolution    & $5\times5$ & $2\times2$ & 64 & Yes & ReLu\\
Transposed convolution    & $5\times5$ & $2\times2$ & 32 & Yes & ReLu\\
Transposed convolution    & $5\times5$ & $2\times2$ & 1/3 & No & Tanh\\
\midrule
$D(x)$    & - & - & $32\times32\times1$/$3$ & - & -\\
Convolution    & $3\times3$ & $2\times2$ & 32 & Yes & Leaky ReLu (0.2)\\
Convolution    & $3\times3$ & $2\times2$ & 64 & Yes & Leaky ReLu (0.2)\\
Convolution    & $3\times3$ & $2\times2$ & 128 & Yes & Leaky ReLu (0.2)\\
Fully connected    & - & - & 1 & No & Sigmoid\\
\midrule
Number of discriminators & $\small\{1, 2, 5, 10\small\}$ & &  &  & \\
$\alpha$ (static) & $\small\{0\small\}$ & &  &  & \\
$\alpha$ (self-learned) & $\{\alpha_{sigm}, \alpha_{soft}, \alpha_{tanh}, \alpha_{ident}\}$ & &  &  & \\
Batch size & 100 & & & & \\
Iterations & 50K & & & & \\
Optimizer & Adam ($lr = 0.0002, \beta_1 = 0.5$) & & & & \\
\bottomrule
\end{tabular}
\end{adjustbox}
\end{sc}
\end{small}
\end{center}
\end{table*}

\begin{table*}[t]
\caption{Training settings for CIFAR-10, STL-10, and ImageNet.}
\label{arc_cifar_stl_imagenet}
\begin{center}
\begin{small}
\begin{sc}
\centering
\begin{adjustbox}{width=1\textwidth}
\begin{tabular}{lcccccr}
\toprule
 & Kernel & Strides & Feature maps & Batch Norm. & Nonlinearity\\
\midrule
$G(z): z \sim Uniform[-1,1]$    & - & - & 100 & - & -\\
Transposed convolution    & $3\times3$ & $4\times4$ & 256 & Yes & ReLu\\
Transposed convolution    & $5\times5$ & $2\times2$ & 128 & Yes & ReLu\\
Transposed convolution    & $5\times5$ & $2\times2$ & 64 & Yes & ReLu\\
Transposed convolution    & $5\times5$ & $2\times2$ & 1/3 & No & Tanh\\
\midrule
$D(x)$    & - & - & $32\times32\times1$ & - & -\\
Convolution    & $3\times3$ & $2\times2$ & 64 & Yes & Leaky ReLu (0.2)\\
Convolution    & $3\times3$ & $2\times2$ & 128 & Yes & Leaky ReLu (0.2)\\
Convolution    & $3\times3$ & $2\times2$ & 256 & Yes & Leaky ReLu (0.2)\\
Fully connected    & - & - & 1 & No & Sigmoid\\
\midrule
Number of discriminators & $\small\{2\small\}$ & &  &  & \\
$\alpha$ (self-learned) & $\{\alpha_{sigm}, \alpha_{soft}, \alpha_{tanh}\}$ & &  &  & \\
Batch size & 100 & & & & \\
Iterations & 200K, 400K, 1M & & & & \\
Optimizer & Adam ($lr = 0.0002, \beta_1 = 0.5$) & & & & \\
\bottomrule
\end{tabular}
\end{adjustbox}
\end{sc}
\end{small}
\end{center}
\end{table*}

\section{Sigmoid initial value}

In Figure~\ref{fig:sigmoid_toy_dataset}, we show and discuss the effects of using different $\beta_{sigm}$ on $\alpha_{sigm}$ on the toy dataset, giving more insights regarding the choice of $\beta_{sigm} = -1.8$ mentioned in Section~\ref{sec:solution}.

\begin{figure*}[h]
\subfigure[Generated samples.]{\includegraphics[width=.49\textwidth]{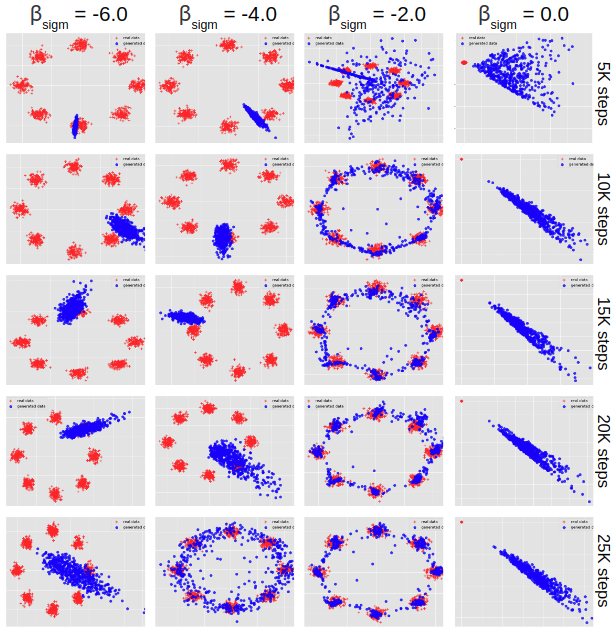}}
\begin{raggedleft}
  \subfigure[$\alpha$ evolution.]{\includegraphics[width=.49\textwidth]{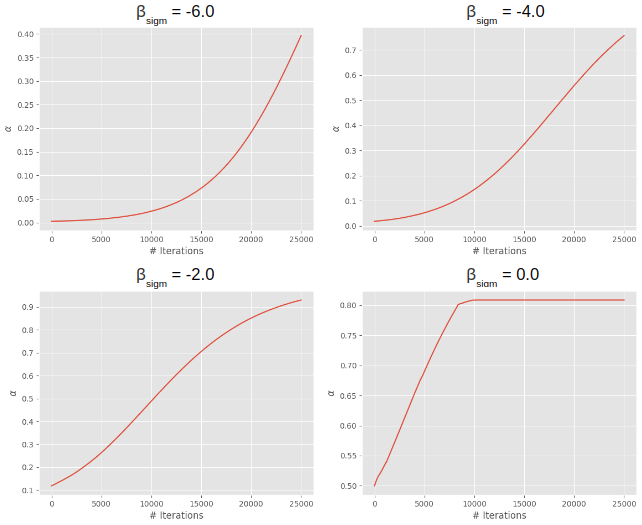}}
  \end{raggedleft}
\caption{Analysis of self-learning $\alpha_{sigm}$ with different initial values of $\beta$. The generated samples in (a) show that using lower $\beta_{sigm}$ values lead the model to mode collapse, since only low $\alpha$ values are used throughout the whole training. On the other hand, using higher values, \textit{e.g.,} $\beta_{sigm} = 0.0$, leads to a steeper increase of $\alpha$ values, inducing the model to only generate varied, but not realistic, samples. We empirically found that using $-2.0 \leq \beta_{sigm} \leq -1.8$ led to diverse plus realistic looking samples from early iterations due to the mild, yet meaningful, increase of $\alpha$ throughout training. The evolution of $\alpha$'s values are presented in (b).}
\label{fig:sigmoid_toy_dataset}
\end{figure*}

\section{Toy dataset comparisons}

Figure~\ref{fig:toydataset_comparisons} shows how different methods compare using the above mentioned toy dataset. We compared microbatchGAN's results (K = 8, $\alpha_{sigm}$) to the standard GAN~(\cite{gans}), UnrollledGAN~(\cite{unrolled_gan}), D2GAN~(\cite{dual_GANs}), and MGAN~(\cite{multi_generator_gans}). We observe bigger sample diversity with our method, while still approximating the real data distribution.

\begin{figure*}[h]
\centering
\includegraphics[width=0.7\textwidth]{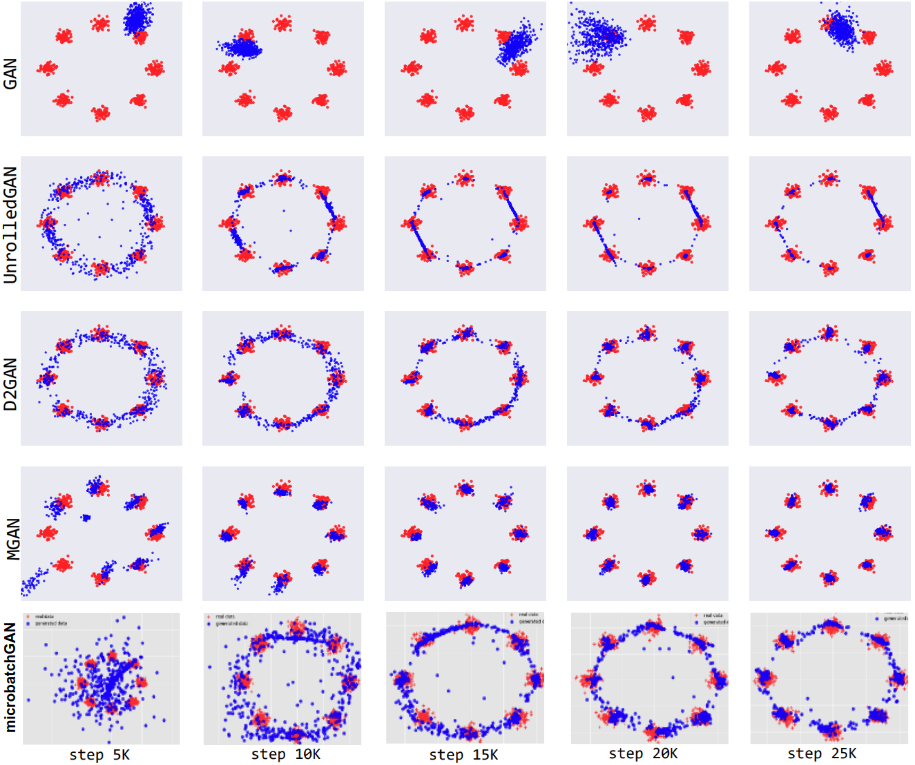}
\caption{Method comparisons on the toy dataset.}
\label{fig:toydataset_comparisons}
\end{figure*}

\section{Extended Results}

Additional results for CIFAR-10, STL-10, and ImageNet are presented bellow.

\begin{figure*}[h]
\centering
\includegraphics[width=0.4\textwidth]{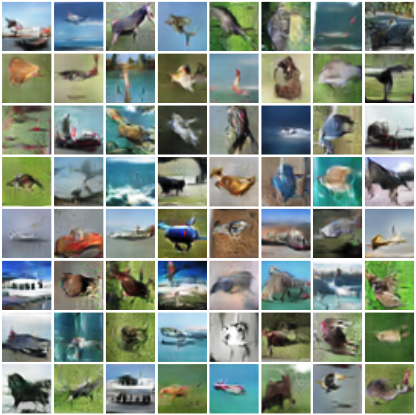}
\caption{CIFAR-10 extended results using K = 2 and $\alpha_{sigm}$.}
\label{fig:cifar_extended_sigm}
\end{figure*}

\begin{figure*}[h]
\centering
\includegraphics[width=0.4\textwidth]{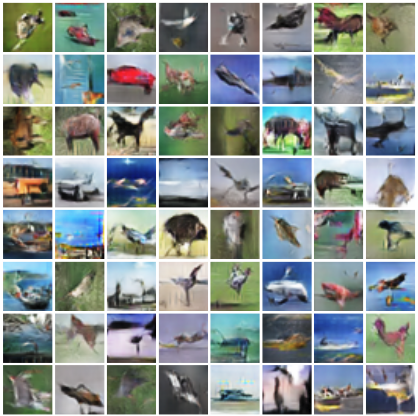}
\caption{CIFAR-10 extended results using K = 2 and $\alpha_{soft}$.}
\label{fig:cifar_extended_soft}
\end{figure*}

\begin{figure*}[h]
\centering
\includegraphics[width=0.4\textwidth]{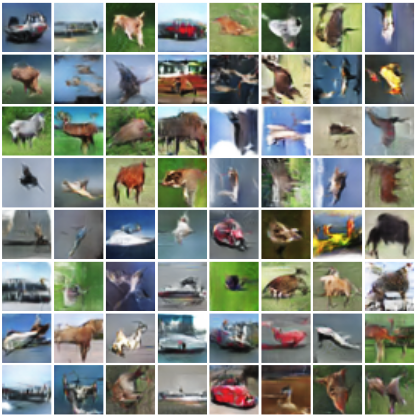}
\caption{CIFAR-10 extended results using K = 2 and $\alpha_{tanh}$.}
\label{fig:cifar_extended_tanh}
\end{figure*}

\begin{figure*}[h]
\centering
\includegraphics[width=0.4\textwidth]{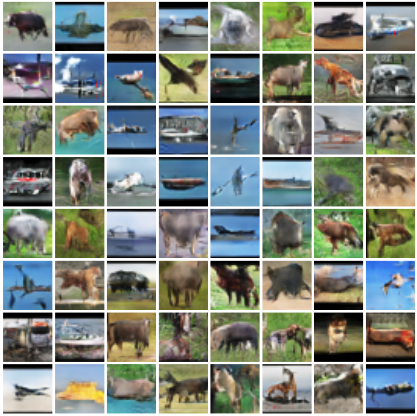}
\caption{STL-10 extended results using K = 2 and $\alpha_{sigm}$.}
\label{fig:stl_extended_sigm}
\end{figure*}

\begin{figure*}[h]
\centering
\includegraphics[width=0.4\textwidth]{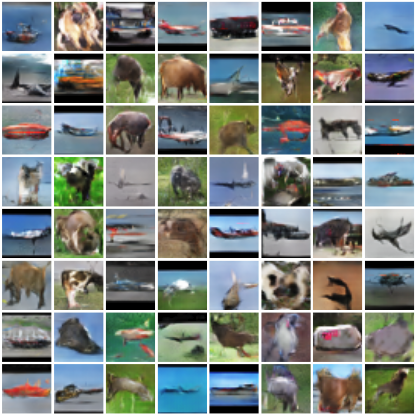}
\caption{STL-10 extended results using K = 2 and $\alpha_{soft}$.}
\label{fig:stl_extended_soft}
\end{figure*}

\begin{figure*}[h]
\centering
\includegraphics[width=0.4\textwidth]{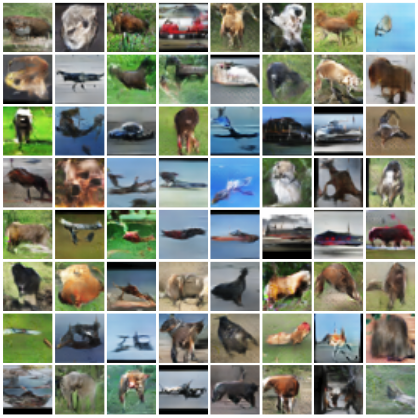}
\caption{STL-10 extended results using K = 2 and $\alpha_{tanh}$.}
\label{fig:stl_extended_tanh}
\end{figure*}

\begin{figure*}[h]
\centering
\includegraphics[width=0.4\textwidth]{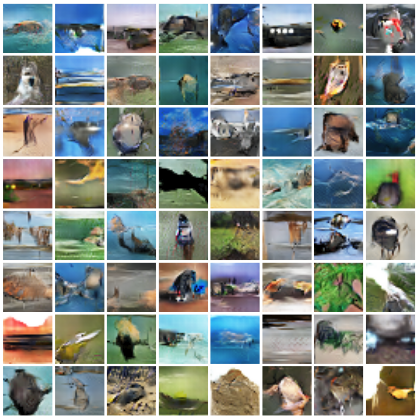}
\caption{ImageNet extended results using K = 2 and $\alpha_{sigm}$.}
\label{fig:imagenet_extended_sigm}
\end{figure*}

\begin{figure*}[h]
\centering
\includegraphics[width=0.4\textwidth]{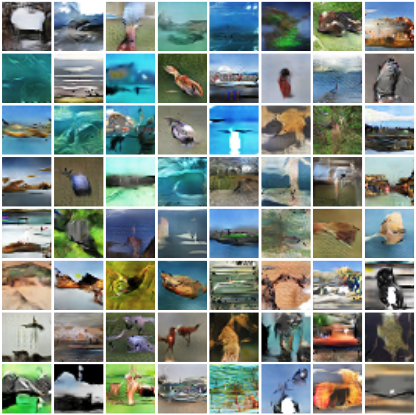}
\caption{ImageNet extended results using K = 2 and $\alpha_{sigm}$.}
\label{fig:imagenet_extended_soft}
\end{figure*}

\begin{figure*}[h]
\centering
\includegraphics[width=0.4\textwidth]{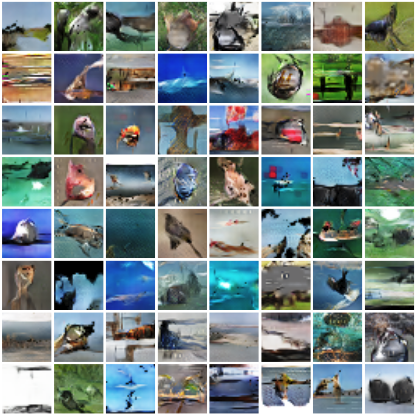}
\caption{ImageNet extended results using K = 2 and $\alpha_{sigm}$.}
\label{fig:imagenet_extended_tanh}
\end{figure*}


\end{document}